\newcommand{\nwsubseteq}{\rotatebox[origin=c]{45}{$\mathbf{\subseteq}$}}
\newcommand{\swsubseteq}{\rotatebox[origin=c]{-45}{$\mathbf{\subseteq}$}}
\newcommand{\qed}{$\Box$}
\newcommand{\ignore}[1]{}
\newcommand{\finish}[1]{}
\newcommand{\skipit}[1]{}
\newcommand{\calA}{{\cal A}}
\newcommand{\calC}{{\cal C}}
\newcommand{\calD}{{\cal D}}
\newcommand{\defeat}{\gg}
\newcommand{\NNP}{\textrm{Honest}$_P$}
\newcommand{\NNO}{\textrm{Honest}$_O$}
\newcommand{\NMP}{\textrm{Honest}$^{Min}_P$}
\newcommand{\NMO}{\textrm{Honest}$^{Min}_O$}
\renewcommand{\P}{\textrm{P}}
\newcommand{\NP}{\textrm{NP}}
\newcommand{\coNP}{\textrm{coNP}}
\newcommand{\PP}{\textrm{PP}}
\newcommand{\PSPACE}{\textrm{PSPACE}}
\newcommand{\ASPIC}{\textsc{ASPIC}}
\newcommand{\DefLog}{\textsc{DefLog}}
\title{Corruption and Audit in Strategic Argumentation}
\date{26 April, 2017}
\author{Michael J. Maher \\
Reasoning Research Institute, \\
Canberra, Australia    \\
E-mail: michael.maher@reasoning.org.au
}
\newcounter{clause}
\def\theclause{$c$\arabic{clause}}
\newtheorem{theorem}{Theorem}
\newtheorem{example}[theorem]{Example}
\newenvironment{proof}[1][Proof]{\begin{trivlist}\item[\hskip \labelsep {\bfseries #1}]}{\end{trivlist}}
\begin{document}

\newcommand{\Acom}{\calA_{Com}}
\newcommand{\AP}{\calA_P}
\newcommand{\AO}{\calA_O}
\newcommand{\Fcom}{F_{Com}}
\newcommand{\FP}{F_P}
\newcommand{\FO}{F_O}
 
\newcommand{\Rcom}{R_{Com}}
\newcommand{\RP}{R_P}
\newcommand{\RO}{R_O}
\newcommand{\DDL}{DL}

\maketitle

\begin{abstract}
Strategic argumentation provides a simple model of disputation and negotiation among agents.
Although agents might be expected to act in our best interests, there is little that enforces such behaviour.
(Maher, 2016) introduced a model of corruption and resistance to corruption within strategic argumentation.
In this paper we identify corrupt behaviours that are not detected in that formulation.
We strengthen the model to detect such behaviours, and show that, under the strengthened model, 
all the strategic aims in (Maher, 2016) are resistant to corruption.
\end{abstract}

\finish{
to address:

from reviews: definition of honest ... do i have 2 meanings, non-corrupt and non-random or arbitrary

check use of "rational"
}

\section{Introduction}

Strategic argumentation is an incomplete-knowledge game in which competing players
take turns in adding arguments to a common pool of arguments
such that at the end of a player's turn that player's strategic aim is (usually temporarily) achieved.
A player loses when she cannot successfully complete her turn.
Each player knows only her own arguments and the arguments in the common pool.

This gives a simple but insightful model of disputation and negotiation.
It is particularly suited as the basis for legal disputation between software agents.\footnote{
For example,
we can imagine agents with access to licensing agreements and copyright law
negotiating on take-down notices for claimed copyright infringement.
}
A wide variety of research has shown that defeasible rules,
which can be combined to create arguments,
are effective in representing contracts and legal reasoning \cite{modelling_precedents,contracts,Grosof04}.
However, there remains the question of trusting the software agents.

\cite{Maher16,Maher16b} investigates what occurs when agents/players corrupt the game
by violating the assumed privacy of a player's arguments.
Two cases are of corruption are considered:
\emph{espionage},
in which a player learns the arguments of her opponent, with her opponent unaware, and
\emph{collusion},
in which two players plan their play together in order to make a specific player win.
An instance of collusion is outlined in the following example.

\begin{example}    \label{ex:corrupt}
Consider the arguments in Figure \ref{fig:SAF8},
where vertices are arguments (grey if they can be played by $P$, white for $O$)
and edges are attacks of one argument on another.  Accepting argument A is $P$'s strategic aim.
Normal play would proceed as follows:
$P$ plays A, $O$ plays B1 (thus defeating A), $P$ plays C (restoring A by defeating B1), and $O$ plays D
(defeating C, and allowing B1 to defeat A).
Thus, normally, $P$ loses.

However, $P$ and $O$ might collude to ensure $P$ wins by playing as follows:
$P$ plays A, $O$ plays B1 \emph{and} B2, and $P$ plays C (restoring A).
$P$ now wins because $O$ has no effective move:
to play D would have no effect because it is defeated by B1.
This sequence of moves might also occur if $O$ committed espionage on $P$
in order to ensure $P$ wins.
\end{example}

\begin{figure}[h]
\vspace{-2.5cm}
\begin{center}
\includegraphics[width=0.5\textwidth]{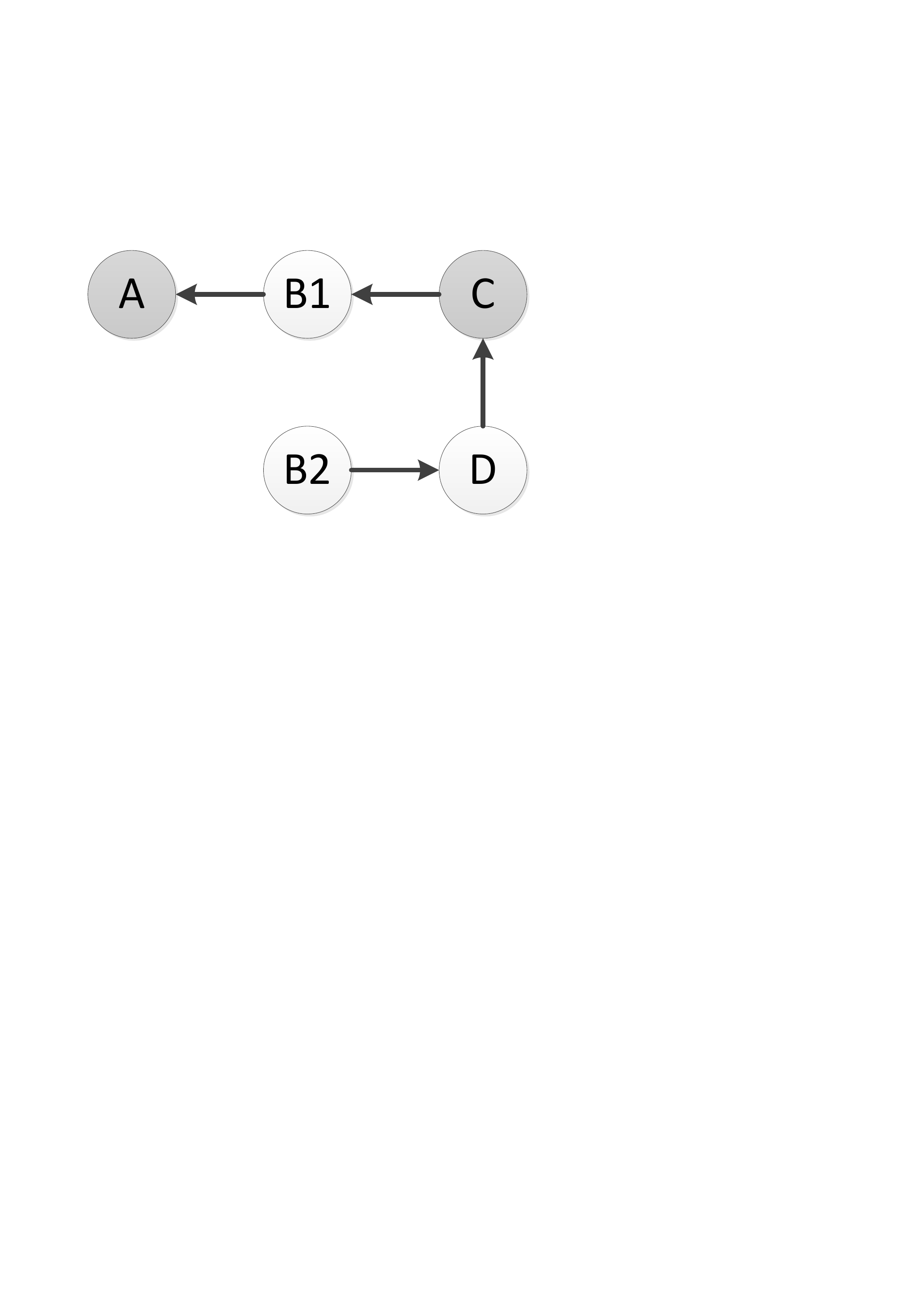}
\end{center}
\vspace{-7.8cm}
\caption{\label{fig:SAF8} A strategic argumentation game.}
\end{figure}

\begin{figure*}
\[
\begin{array}{rccccccccccccccccl}
&                    & \NP     &                        &                   &                      & \Sigma^p_2  \\
& \nwsubseteq &          & \swsubseteq  &                   & \nwsubseteq  &                      &  \swsubseteq  &                   & \\
\P &                    &          &                       & \Delta^p_2 &                       &                       &                        & \Delta^p_3 & \subseteq & \cdots &\subseteq & \NP^\PP  &  \subseteq  & \NP^{\NP^\PP}   & \cdots & \subseteq~~  \PSPACE\\
& \swsubseteq  &          & \nwsubseteq  &                   & \swsubseteq  &                      &  \nwsubseteq  &                   & \\
&                    & \coNP &                       &                    &                        &  \Pi^p_2 \\
\end{array}
\]
\caption{\label{fig:complexity} Some complexity classes in the polynomial counting hierarchy, ordered by containment.}
\end{figure*}

Clearly, in this example, the playing of B2 is foolish,
and might be considered a sign of incompetence or corruption.
However, the model of \cite{Maher16} permits this play.
Indeed, it might be argued that, in general, the playing of multiple arguments by $O$
further restricts the possible responses
of $P$, because those arguments might attack arguments that $P$ otherwise might have played.
On the other hand, the privacy of $O$'s arguments can be seen as a strategic advantage,
in which case playing multiple arguments is surrendering some of this advantage.

Although playing B2 ends badly for $O$ in this example,
we cannot consider it a sign of corruption if there are other circumstances in which playing B2
would improve the chances of $O$.
And even if playing B2 is uniformly bad, there is still the question of \emph{why}?
Is it because $O$ did not play the minimum number of arguments to achieve her aim
(as might be suggested from the surrendering strategic advantage line of thought),
or is it because $O$ did not play a minimal set of arguments that achieved her aim?
(In this example the two are the same, but in general they are different.)
Or is it because B2 attacks one of $O$'s own arguments, thus causing a self-inflicted injury?
Answers to these questions will enable us to insist on a stronger standard of behaviour from players,
which will make corruption harder to disguise.

In the following, we show that non-minimal sets of arguments are dominated by the corresponding minimal sets.
(Consequently, there is no circumstance in which playing B2 has a positive effect.)
By imposing the standard that players may only play minimal sets
we make it harder to perform collusion.
Furthermore, we show that this higher standard on players also improves resistance to collusion.


\section{Background}

\subsection{Abstract Argumentation}

This work is based on abstract argumentation in the sense of \cite{Dung95},
which addresses the evaluation of a static set of arguments.
An \emph{argumentation framework} $\calA = (S, \defeat)$ consists of a finite set of arguments $S$ and
a binary relation $\defeat$ over $S$, called the attack (sometimes, defeat) relation.
If $(a, b) \in \defeat$ we write $a \defeat b$ and say that $a$ attacks $b$.
We say there is a \emph{conflict} between arguments $a_1$ and $a_2$ if either $a_1 \defeat a_2$ or $a_2 \defeat a_1$.
A set of arguments $S' \subseteq S$ is \emph{conflict-free} if the restriction of $\defeat$ to $S'$ is empty.
The semantics of an argumentation framework is given in terms of \emph{extensions},
which are conflict-free subsets of $S$.

Given an argumentation framework, 
an argument $a$ is said to be \emph{accepted} in an extension $E$ if $a \in E$,
and said to be \emph{rejected} in $E$ if some $b \in E$ attacks $a$.
The set of rejected arguments in $E$ is denoted by $E^-$.
An argument that is neither accepted nor rejected in $E$ is said to be \emph{undecided} in $E$.
An argument $a$ is \emph{defended} by $E$ if every argument that attacks $a$ is attacked by some argument in $E$.
An extension $E$ of $\calA$ is \emph{complete} if it is conflict-free and, 
$a \in E$ iff $a$ is defended by $E$.
The least complete extension under the containment ordering exists and is called the \emph{grounded} extension.
It reflects a strongly sceptical attitude towards accepting arguments.
An extension $E$ of $\calA$ is \emph{stable} if it is conflict-free and for every argument $a \in S\backslash E$
there is an argument in $E$ that attacks $a$.


A semantics is defined to be a set of extensions:
the grounded semantics consists only of the grounded extension,
and
the stable semantics is the set of stable extensions.
There are many other semantics, including
the preferred, semi-stable, ideal, and eager semantics.
These are all formed as structurally-defined subsets of the set of complete extensions.
Such semantics will be called \emph{completist}.
There are also several other semantics that are not completist, notably the naive, stage, and CF2 semantics.
In this paper we focus on the grounded and stable semantics, 
since they are the semantics most commonly employed by existing languages.
However, some results extend more broadly, to all completist semantics.

Each semantics implicitly expresses a criterion for what arguments can coherently be accepted together,
given an argumentation framework.
Each extension in the semantics represents a ``reasonable'' adjudication, according to that criterion,
of the arguments in the argumentation framework.
Thus the grounded semantics is highly sceptical
while the stable semantics requires that no argument is left undecided.

Structural properties of an argumentation framework can influence the relationship between various semantics.
An argumentation framework is \emph{well-founded} 
if there is no infinite sequence of arguments $a_1, a_2, \ldots, a_i, a_{i+1}, \ldots$
such that, for each $i$, $a_{i+1}$ attacks $a_i$.
Such argumentation frameworks have a single complete extension,
which must be the grounded extension,
in which every argument is either accepted or rejected \cite{Dung95}.
Every completist semantics for such argumentation frameworks consists of this single extension.

\subsection{Computational Complexity}

We can view a complexity class as a set of decision problems.
We assume the reader has knowledge of the polynomial complexity hierarchy
(see, for example, \cite{catalog_complexity}).
We use $\P$ to refer to the class of problems solvable in polynomial time.
$\PSPACE$ is the class of decision problems solvable in polynomial space.
It contains the entire polynomial hierarchy $PH$.
As usual,
the notation $\calC^\calD$, where $\calC$ and $\calD$ are complexity classes,
refers to the class of problems that can be decided
by an algorithm of complexity $\calC$ with calls to a $\calD$ oracle.

To address counting aims in strategic argumentation we need to consider the complexity class $\PP$
and related classes.
Roughly, $\PP$ is the class of decision problems that have more accepting paths than rejecting paths
in a nondeterministic Turing machine.
A complete problem for $\PP$ is to decide whether a given Boolean formula is satisfied by
more than half of the assignments to its variables.
$\PP$ contains both NP and coNP, 
but there is an oracle relative to which $\Delta^p_2$ is not contained in $\PP$
\cite{Beigel94a}. 
$\NP^\PP$ and $\NP^{\NP^\PP}$ lie between the polynomial hierarchy and $\PSPACE$, 
that is, $PH \subseteq \NP^\PP \subseteq \NP^{\NP^\PP} \subseteq \PSPACE$.
The \emph{counting polynomial-time hierarchy} \cite{Wagner}
is the extension of the polynomial hierarchy that also involves the complexity class $\PP$.

Figure \ref{fig:complexity} shows some complexity classes in the counting polynomial hierarchy,
under the containment ordering.
The containments are believed by the majority of complexity theorists to be strict,
but the strictness remains open.
Complete problems of a complexity class are the hardest problems in that class.
Consequently, if Problem A is (say) in $\Delta^p_2$ and Problem B is $ \Sigma^p_2$-complete
then, in an informal sense, B is harder than A.

\section{Strategic Argumentation}   \label{sect:SA}

Strategic argumentation provides a simple model of dynamic argumentation.
Originally  \cite{stratarg07,SA} it was formulated for a concrete argumentation system based in a defeasible logic,
but we will use the model of \cite{Maher16} which is defined in terms of abstract argumentation.
In strategic abstract argumentation, players take turns to
add arguments to an argumentation framework.
At each turn, the player adds arguments so that the argumentation framework is in a desired state.
We refer to such states interchangeably as \emph{desired outcomes} or \emph{strategic aims} of the player.
A player loses the strategic argumentation game when she is unable to achieve her desired outcome.
In general, both players can win if the argumentation reaches a state that is desired by both players,
but in this paper we consider an adversarial setting where the players' aims are mutually exclusive.
We say that a player is \emph{honest} if she plays rationally with the aim of winning, 
with only information that is revealed by play of the game
(that is, she does not participate in corruption).

Strategic abstract argumentation is formalized as follows \cite{Maher16}.
We assume there are two players, a proponent $P$ and her opponent $O$.
A \emph{split argumentation framework} $(\Acom, \AP, \AO, \defeat)$
consists of three sets of arguments:
$\Acom$ the arguments that are common knowledge to $P$ and $O$,
$\AP$ the arguments available to $P$,
and
$\AO$ the arguments available to $O$;
and an attack relation $\defeat$ over $\Acom \cup \AP \cup \AO$.
$\AP$ is assumed to be unknown to $O$, and $\AO$ is unknown to $P$.
Each player is aware of $\defeat$ restricted to the arguments they know.
Thus, the privacy of a player's arguments is a strategic advantage.
We assume that $P$'s desired outcome
is that a distinguished argument $a$ (called the \emph{focal} argument) is accepted, in some sense,
while $O$'s aim is to prevent this.
Starting with $P$, the players take turns in adding sets of arguments to $\Acom$ from their available arguments,
ensuring that their desired outcome is a consequence of the resulting argumentation framework\footnote{
Each player's move is a normal expansion \cite{BaumannB10}.
}.
As play continues, the set of arguments that are common knowledge $\Acom$ becomes larger.
When a player is unable to achieve her aim on her turn to play, she loses.

We represent a split argumentation framework as a graph as follows:
each argument is a vertex, and there is a directed edge from $A$ to $B$ iff $A$ attacks $B$.
Arguments in $\AP$ are grey and arguments in $\AO$ are white.
Thus the split argumentation framework in Figure \ref{fig:SAF8}
is $( \emptyset, \{A, C\}, \{B1, B2, D\}, \defeat )$,
where $\defeat$ is the relation described by the edges in the figure.

It might be questioned whether it is necessary to allow players to add more than one argument.
Indeed, several persuasion games permit players to add only a single argument at a time \cite{VreesPrakken,PAF,ModgilC09}.
Such games have a different motivation: to give an operational characterization of argumentation semantics.
In our context, where the game determines the winner of a dispute,
it seems unfair that a player with a greater number of arguments is unable to take advantage
of that fact.
The following example shows that the restriction of moves to a single argument
can prevent a player from responding to her opponent, even though she has the arguments to win.

\begin{example}
Consider the split argumentation framework presented in Figure \ref{fig:SAFmulti},
where $P$'s desired outcome is that the argument A is accepted.
In any completist semantics, the corresponding argumentation framework supports A.
This follows from the fact that the argumentation framework is well-founded \cite{Dung95}.
$P$ can be seen to be in a dominant position to defend A when all the arguments are known,
since all $O$'s arguments that attack A are themselves attacked by arguments of $P$ (G and H)
that are not attacked.

Now consider the following sequence of moves:
$P$ plays A, $O$ responds with B, and then the arguments C, D and E are played in turn.
$O$ then plays F.
At this point, $O$'s arguments B and D are undefeated, since F attacks their attackers.
If $P$ is only allowed to play a single argument, then she loses.
On the other hand, if $P$ can play both G and H then she wins.

With hindsight, in this game, it appears that $P$ has made poor choices of which argument to play.
However, $P$ has no knowledge of $O$'s arguments, and so, to $P$, the choice between C and G to attack B
is entirely symmetric, as is the choice between E and H to attack D. 
\end{example}

\begin{figure}
\vspace{-0.5cm}
\begin{center}
\includegraphics[width=0.5\textwidth]{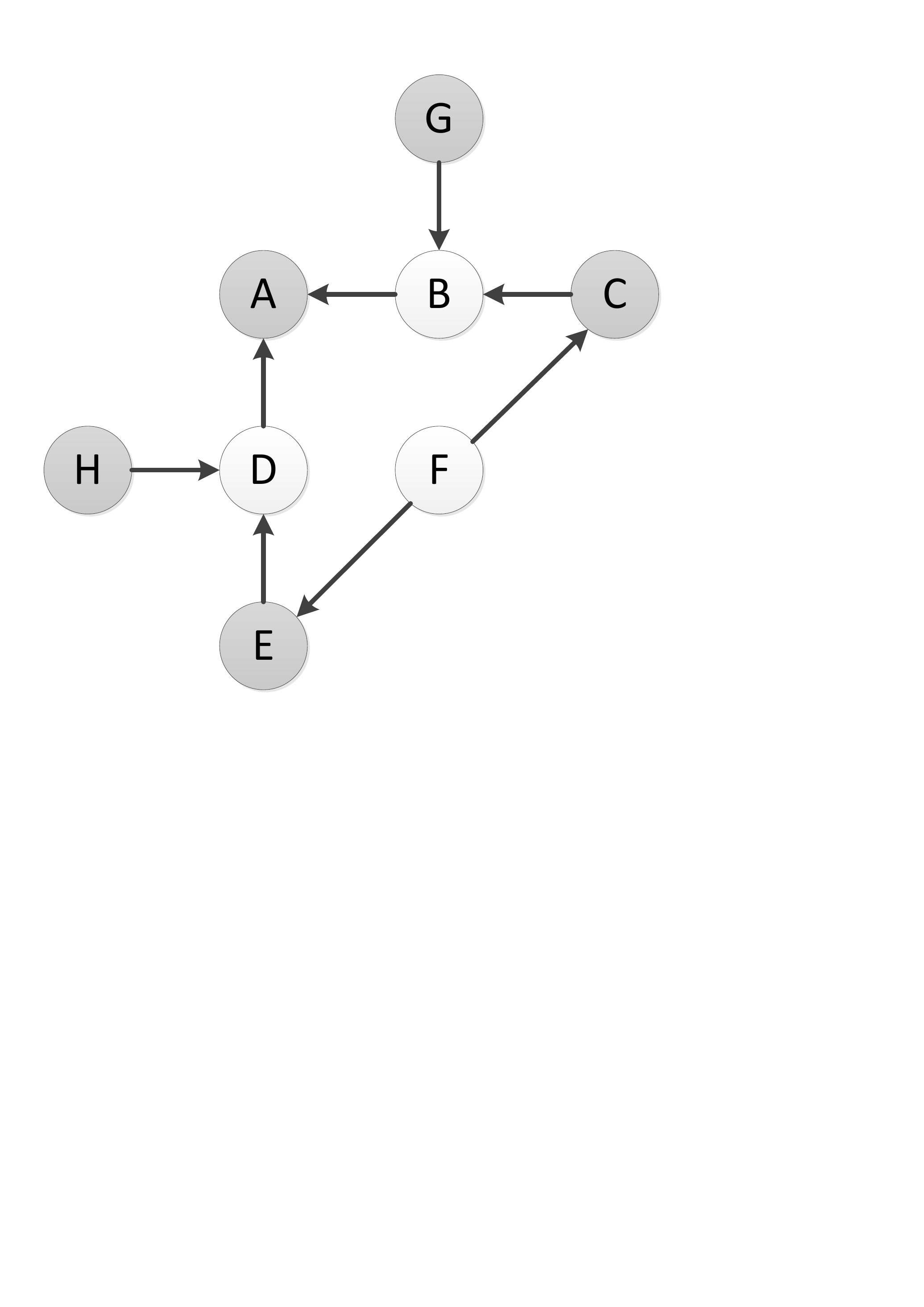}
\end{center}
\vspace{-5.8cm}
\caption{\label{fig:SAFmulti} A strategic argumentation game.}
\end{figure}

\cite{Maher16,Maher16b} identifies several plausible strategic aims that the proponent $P$ might have
under an argumentation semantics $\sigma$:

\begin{enumerate}
\item 
\textbf{Existential:}
$a$ is accepted in at least one $\sigma$-extension
\item 
\textbf{Universal:}
$a$ is accepted in all $\sigma$-extensions
\item 
\textbf{Unrejected:}
$a$ is not rejected in any $\sigma$-extension
\item 
\textbf{Uncontested:}
$a$ is accepted in at least one $\sigma$-extension and is not rejected in any $\sigma$-extension
\item 
\textbf{Plurality:}
$a$ is accepted in more $\sigma$-extensions than it is rejected
\item 
\textbf{Majority:}
$a$ is accepted in more $\sigma$-extensions than it is not accepted
\item 
\textbf{Supermajority:}
$a$ is accepted in at least twice as many $\sigma$-extensions than it is not accepted
\end{enumerate}

\begin{table*}[t]
\begin{center}
\begin{tabular}{|l||r|r|r|r|r|r|}
\hline
                                   & ~$AV_P$~  & $AV_O$~ &    ~~~~~$DO_P$~   & ~~~~~$DO_O$~  & \NNP~ & \NNO~ \\  
\hline \hline
Grounded semantics & ~in PTIME~  & ~in PTIME~ & NP-c~  & NP-c~ & $\Delta^p_2$-c~ & $\Delta^p_2$-c~  \\ 
\hline \hline
\multicolumn{7}{|l|}{Stable semantics}   \\
\cline{2-7}
~~~~~Existential                  & NP-c~  & coNP-c~ &    NP-c~               & $\Sigma^p_2$-c~  & $\Delta^p_2$-c~ & $\Delta^p_3$-c~  \\  
\cline{2-7}
~~~~~Universal                   & coNP-c~  & NP-c~   & $\Sigma^p_2$-c~   & NP-c~   & $\Delta^p_3$-c~ & $\Delta^p_2$-c~            \\ 
\cline{2-7}
~~~~~Unrejected                & coNP-c~  & NP-c~    & $\Sigma^p_2$-c~  & NP-c~   & $\Delta^p_3$-c~ & $\Delta^p_2$-c~    \\  
\cline{2-7}
~~~~~Uncontested               & coNP-c~  & NP-c~  & $\Sigma^p_2$-c~  & NP-c~   & $\Delta^p_3$-c~ & $\Delta^p_2$-c~              \\ 
\cline{2-7}
~~~~~Plurality/Majority~         & PP-c~ & PP-c~   & $\NP^\PP$-c~ & $\NP^\PP$-c~  & ~$\P^{\NP^\PP}$-c~  & ~$\P^{\NP^\PP}$-c~ \\
\cline{2-7}
~~~~~Supermajority            & PP-c~ & PP-c~   & $\NP^\PP$-c~ & $\NP^\PP$-c~  & ~$\P^{\NP^\PP}$-c~  & ~$\P^{\NP^\PP}$-c~ \\ 
\hline
\end{tabular}
\bigskip
\caption{\label{table:AVDO} Complexity of Aim Verification and Desired Outcome problems, and honest play, for $P$ and $O$ 
under the grounded and stable semantics, for selected aims (of $P$).}
\end{center}
\end{table*}

The existential and universal aims are credulous and sceptical acceptance, respectively.
In addition to the above aims for $P$, $O$ aims to ``spoil'' or prevent such aims from being achieved.
$O$'s aims are the negation of the above aims.
Because the grounded semantics consists of a single extension,
all these aims, except Unrejected, are identical under that semantics.
Furthermore, the Unrejected aim has the same complexity as the other aims
under the grounded semantics, so we will not distinguish the different aims under this semantics.
Under the stable semantics no argument is undecided.
Consequently, the Plurality and Majority aims are identical for the stable semantics.

The problem of verifying that an aim is satisfied by some state of strategic argumentation
is a fundamental part of each move in a game,
and of the exploitation of corrupt behaviour.
\ \\ \ \\
\noindent
\textbf{The Aim Verification Problem for $P$}

\textbf{Instance}
An argumentation framework $(\Acom, \defeat)$,
an argumentation semantics,
and an aim.

\textbf{Question}
Is the aim satisfied under the given semantics by the given argumentation framework?
\ \\

The Desired Outcome problem \cite{Maher16} is the problem that a player must solve at each step
of a strategic abstract argumentation game.
It involves identifying that the player has a legal move.
\ \\ \ \\
\noindent
\textbf{The Desired Outcome Problem for $P$}

\textbf{Instance}
A split argumentation framework $(\Acom, \AP, \AO, \defeat)$
and a desired outcome for $P$.

\textbf{Question} 
Is there a set $I \subseteq \AP$ such that
$P$'s desired outcome is achieved in
the argumentation framework $(\Acom \cup I, \defeat)$?
\ \\

It is not difficult to see that this problem can be solved by a non-deterministic algorithm
with an oracle for the Aim Verification problem.

We refer to these problems (and the corresponding problems for $O$) as $AV_P$ ($AV_O$) and $DO_P$ ($DO_O$).

Playing strategic argumentation involves solving the desired outcome problem at each turn.
We can formulate this as a deterministic polynomially bounded algorithm with an oracle for
the player's desired outcome problem.
Consequently, we can identify the complexity of playing strategic argumentation as $\P^{DO}$,
where $DO$ is the complexity of the desired outcome problem.

Combining results of \cite{Maher16,Maher16b}, for $P$, with new results for $O$
we can identify the complexity of these problems and, consequently,
the complexity of normal, honest play in a strategic argumentation game.
Honest play involves the solving of a polynomially-bounded sequence of
Desired Outcome problems.
We use the suffix ``-c'' to indicate that the problem is complete for the given class.

\begin{theorem}
The complexity of Aim Verification and Desired Outcome problems
and the complexity of honest play, for both $P$ and $O$,
are as stated in Table \ref{table:AVDO}.
\end{theorem}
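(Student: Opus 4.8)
The plan is to treat the theorem as a collection of completeness claims, one per cell of Table~\ref{table:AVDO}, and to establish each by proving a matching upper bound (membership) and lower bound (hardness). Two uniform observations, already noted in the text, tie the columns together, and I would record them first. Since a desired-outcome instance is solved by guessing a set $I \subseteq \AP$ (resp.\ $\AO$) and then verifying the aim, every $DO$ problem lies in $\NP^{AV}$ for the corresponding $AV$ problem; and since honest play is a polynomially bounded sequence of adaptive $DO$ queries, each honest column lies in $\P^{DO}$ for the corresponding $DO$ problem. These inclusions supply the upper bounds for the $DO$ and honest columns directly from the $AV$ column, except where a collapse occurs, which I would identify case by case.

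For membership in the $AV$ column I would argue directly from the semantics. Under the grounded semantics there is a single extension, computable in polynomial time, so every aim is verifiable in PTIME. Under the stable semantics, the existential aim (credulous acceptance) is in $\NP$ by guessing a stable extension containing $a$; the universal and unrejected aims are in $\coNP$ since their negations are credulous (existential) guesses; the uncontested aim combines a sceptical ($\coNP$) with a credulous ($\NP$) condition but, under the stable semantics where no argument is undecided, collapses to $\coNP$; and $O$'s spoiling aims swap $\NP$ and $\coNP$ accordingly. The counting aims are in $\PP$ via the standard correspondence between the number of stable extensions accepting/rejecting $a$ and the accepting paths of a nondeterministic machine. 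I would then trace the collapses in the $DO$ column: for instance $DO_P$ under the existential stable aim stays in $\NP$ rather than $\Sigma^p_2$ because $I$ and the witnessing stable extension can be guessed simultaneously, whereas $DO_O$ on the same row is genuinely $\Sigma^p_2$ because its inner sceptical test is $\coNP$ and does not merge with the outer existential guess. The remaining $DO$ and honest cells then follow from $\NP^{AV}$ and $\P^{DO}$ together with these collapses.

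For the lower bounds I would cite \cite{Maher16,Maher16b} for the $P$-columns and supply fresh reductions for the $O$-columns and, where needed, for honest play. The $AV_O$ hardness results reduce from TAUT/UNSAT (for $\coNP$) and SAT (for $\NP$); the $DO_O$ existential cell reduces from a $\Sigma^p_2$-complete quantified Boolean formula, encoding the outer $\exists$ by the choice of $I \subseteq \AO$ and the inner $\forall$ by sceptical stable acceptance. The counting cells reduce from MajSat (for $\PP$), from the $\NP^\PP$-complete problem E-MajSat (for the $DO$ counting cells), and from the corresponding $\P^{\NP^\PP}$-complete problem for the honest counting cells. Each reduction must build a split argumentation framework whose stable extensions are in bijection with truth assignments, with the partition of arguments between $\AP$ and $\AO$ chosen so that the quantifier or counting structure of the source problem is realized exactly by the ``guess $I$, then evaluate the aim'' shape of the target.

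The main obstacle I anticipate is the hardness side for $O$ at the higher levels, especially the $\Sigma^p_2$ cell and the counting cells ($\NP^\PP$ and $\P^{\NP^\PP}$). The difficulty is twofold: the gadget must make the number of stable extensions accepting $a$ track the number of satisfying assignments of the encoded formula \emph{exactly}, and it must respect the privacy split, since $DO_O$ quantifies only over $\AO$ while $P$'s already-played arguments sit in $\Acom$. In particular I would need to ensure that arguments added by $O$ cannot inadvertently create stable extensions that aid $P$'s aim, and that the bijection between extensions and assignments is not disturbed by the attack edges used to attach the gadget to the focal argument $a$. The honest-play lower bounds are similarly delicate, since they require a game whose honest continuation is forced to simulate adaptive oracle calls; I expect these to follow by adapting the $DO$ reductions into a short sequence of moves rather than by a genuinely new construction.
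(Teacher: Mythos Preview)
Your proposal is correct and follows essentially the same decomposition as the paper: upper bounds via the generic inclusions $DO \in \NP^{AV}$ and honest play $\in \P^{DO}$, $AV_O$ obtained by complementing $AV_P$, and the $P$-columns taken from \cite{Maher16,Maher16b}. The one place where the paper is more concrete than you is the hardness side for honest play. Rather than ``adapting the $DO$ reductions into a short sequence of moves,'' the paper reduces from the canonical $\Delta^p_i$-complete problem of deciding the value of the last variable in the lexicographically greatest satisfying assignment of the appropriate QBF: it builds a split framework from $n$ copies of the formula, wired so that each round forces the player to exhibit a solution strictly larger than the one found in the previous round, while the opponent's move merely unlocks the next copy. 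Honest play then simulates a binary search for the lex-greatest solution. Your wording ``a short sequence of moves'' slightly undersells what is needed here: the $\Delta^p_i$ lower bound relies precisely on the sequence having length $\Theta(n)$ with each move adaptive on the outcome of the previous one, which is exactly what the lex-greatest problem packages. Naming that source problem and the multi-copy construction would close the one underspecified step in your plan.
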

\skipit{
\begin{proof}
The complexity of $AV_P$ and $DO_P$ comes from \cite{Maher16,Maher16b}.
By \cite{Maher16b}, the complexity of $AV_O$ is the complement of that of $AV_P$.
As observed earlier, the complexity of $DO_O$ is bounded above by $\NP^{AV_O}$.
Completeness for these results ????

The complexity of honest play is in $\P^{DO}$ for both $P$ and $O$,
as described in \cite{Maher16b}.
A complete problem for $\Delta^p_i$ is to determine whether the value of the last (i.e. least significant) variable in
the lexicographically greatest solution of the corresponding QBF is $1$.
This reduces to honest play
by constructing a split argumentation framework where there are $n$ copies of the quantified Boolean formula with $n$ variables
with constraints forcing a greater solution than previously found.
Player does binary search to find the lexicographically greatest solution.
Opposition move merely enables access to the next copy.

\end{proof}
}

\section{Corruption and Audit}   \label{sect:corrupt}

Although corruption takes place away from the strategic argumentation game,
it has no point unless it is exploited to alter the outcome of the game.
Thus the presence of corruption might be detected from the play of the game.
Conversely, corrupt players will attempt to disguise their exploitation of corruption as honest play.
The computational cost of disguising corruption is generally greater than
the computational cost of honest play, which is the source of the notion of
\emph{resistance to corruption} \cite{Maher16}.

However, in the model of  \cite{Maher16},
for some aims under the stable semantics there is no resistance to collusion.
That is, the cost of disguising corruption is no greater than the cost of honest play.
Thus there is no computational disincentive to collusion for those aims.
We seek to remedy this weakness.

It is helpful to view the verification of honest play as an audit.
There are two parts to an audit: standards that should be upheld,
and testing for/verification of compliance to those standards.
In the model of  \cite{Maher16} there is only one, implicit standard for play:
that a player may not abandon play while she can make a move.
However, we saw in Example \ref{ex:corrupt} that this standard is not sufficient:
obvious cases of collusion or espionage remain compliant with this standard.
Hence, we must normalize a higher standard of play so that compliance to
that standard is a better test of honest play.
However, a standard is only effective if it does not interfere with honest play.
That is, a player should never face a choice between following the standard and
improving her chances of winning.

Example \ref{ex:corrupt} and the discussion in the Introduction
suggested three possible restrictions on a player's moves
that would enforce higher standards.
Of the three, the avoidance of self-inflicted injury requires refinement, since self-injury cannot always be avoided,
and its suitability as a standard remains open.
So we consider the other two.
We say a move is \emph{effective} if the resulting argumentation framework satisfies the player's aim.
A \emph{minimal effective} (or, simply, \emph{minimal}) move by a player is an effective move such that no subset of that move is effective.  
A \emph{minimal cardinality} move is an effective move such that no move with fewer arguments is effective.

The minimum cardinality restriction forces the omission of moves that are needed to win,
as the following example shows.  Thus it is not suitable as a standard.

\begin{example}    \label{ex:SAF1}
Consider the strategic argumentation framework in Figure \ref{fig:SAF16},
and play that proceeds as follows:
$P$ plays A, $O$ plays B1 and B2, and $P$ plays C1 and C2.
At this stage $O$ must defeat both C1 and C2, and she has two alternatives:
(1) play E, which attacks both C1 and C2, or
(2) play both D1 and D2, each attacking one of the C arguments.
Clearly (1) is the minimum cardinality move.
However, $P$ then responds with F, and wins.
In (2), the play of F is insufficient for $P$, since B2 remains undefeated.
Hence $O$ wins.

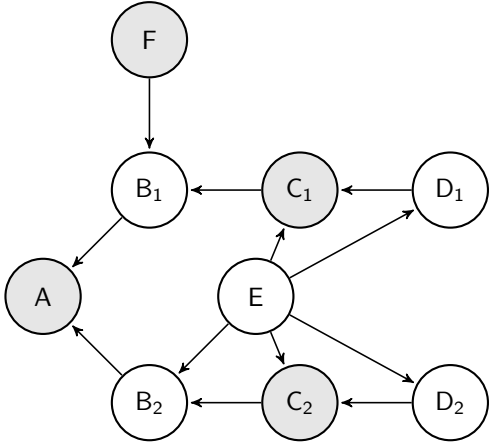
\begin{figure}
\begin{center}
\begin{tikzpicture}[->,>=stealth',shorten >=1pt,auto,node distance=2.0cm,
                    semithick]
  \tikzstyle{fake}=[]
  \tikzstyle{O}=[circle,fill=white,draw=black,text=black,thick,minimum size=10mm]
  \tikzstyle{P}=[circle,fill=black!10,draw=black,text=black,thick,minimum size=10mm]

   \node[P] (A)						{$\mathsf{A}$};
   \node[O] (B1)	[above right of=A]		{$\mathsf{B_1}$};
   \node[O] (B2)	[below right of=A]		{$\mathsf{B_2}$};
   \node[P] (C1)	[right of=B1]			{$\mathsf{C_1}$};
   \node[P] (C2)	[right of=B2]			{$\mathsf{C_2}$};
   \node[O] (D1)	[right of=C1]			{$\mathsf{D_1}$};  
   \node[O] (D2)	[right of=C2]			{$\mathsf{D_2}$};  
   \node[P] (F)	[above  of=B1]			{$\mathsf{F}$};
   \node[O] (E)	[above right of=B2]		{$\mathsf{E}$};

  \path (B1) 	edge				node {} (A)
  	   (B2) 	edge				node {} (A)
	   (C1)	edge	 			node {} (B1)
	   (C2)	edge	 			node {} (B2)
	   (D1)	edge	 			node {} (C1)
	   (D2)	edge	 			node {} (C2)
	   (F)		edge				node {} (B1)
	   (E)		edge				node {} (C1)
	   		edge				node {} (D1)
	   		edge				node {} (B2)
	   		edge				node {} (C2)
	   		edge				node {} (D2)
;

\end{tikzpicture}
\end{center}
\caption{\label{fig:SAF16} Split argumentation framework demonstrating non-dominance of minimum cardinality moves.}
\end{figure}

\end{example}

Thus,
we focus on avoiding the play of redundant arguments,
that is, the standard of playing minimal sets of arguments that achieve the players strategic aim.
As observed above, this standard is only reasonable if it does not eliminate an advantageous move for a player.
A move $m_1$ is \emph{dominated} by move $m_2$ if,
in every state of the game, and for every future sequence of moves,
$m_2$ leads to better or equal outcomes than $m_1$ for the player making the move.

\begin{theorem}
Every non-minimal move is dominated by a minimal move.

A player gains no advantage by making a non-minimal move,
and such moves can be disadvantageous
\end{theorem}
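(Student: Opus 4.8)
The plan is to prove a single, symmetric \emph{reserve-monotonicity} lemma and obtain the theorem as a special case of it. First I would reduce the statement to a comparison against a subset: if $m$ is a non-minimal effective move from the current common pool $\Acom$, I pick a minimal effective $m' \subseteq m$ (one exists, since $m$ is effective and the argument sets are finite) and set $R = m \setminus m' \subseteq \AP$, the redundant arguments. It then suffices to show that the position reached by playing $m'$ is at least as good for $P$ as the one reached by playing $m$. These two positions differ only in that $R$ lies in $P$'s reserve in the former and in the common pool in the latter; in both it is $O$'s turn and $P$'s aim holds. So everything comes down to showing that a player never loses by keeping her own arguments in reserve.

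The key lemma I would state is exactly this, symmetrically for either player $Z$: given two positions with the same player to move, whose pools differ only by a set $D$ of $Z$'s own arguments (held in reserve in one, already played in the other), and in which the aim of the player who just moved holds in both, player $Z$ can force at least as good an outcome from the position where $D$ is still in reserve. Since the common pool strictly grows and the argument sets are finite, the game is finite, and since a player who cannot achieve her aim loses, it is drawless; hence every position is a forced win for exactly one player and ``at least as good'' means preservation of a forced win. I would prove the lemma by induction on the total number of unplayed arguments.

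The induction step is the heart of the argument. Take $Z = P$, with $O$ to move, small pool $C$ and large pool $C \cup R$, and assume $P$ wins from $(C \cup R, O)$. For an arbitrary legal (i.e.\ $O$-aim satisfying) move $n$ of $O$ from $C$, I must produce a win for $P$ from $(C \cup n, P)$. If $n$ is \emph{also} legal from $C \cup R$, then $P$ plays her winning response there bundled with the entire reserve $R$; as the arguments involved are disjoint, both lines land in the identical position, after which the game trees coincide and the forced win transfers. The hard case, which I expect to be the main obstacle, is when $n$ is legal from $C$ but \emph{not} from $C \cup R$: here non-monotonicity of the semantics blocks any direct mirroring of $O$'s move. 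I would resolve it using complementarity of the aims (recall $O$'s aim is the negation of $P$'s): since $O$'s aim fails in $C \cup R \cup n$, $P$'s aim holds there, so $P$ may legally play $R$, reaching $(C \cup R \cup n, O)$. This position differs from the known $P$-win $(C \cup R, O)$ only by the committed $O$-arguments $n$, so it is an instance of the same lemma with $Z = O$ and strictly fewer unplayed arguments; the inductive hypothesis for $O$ (her forced loss from $(C \cup R, O)$ persists when she commits $n$ rather than reserving it) delivers the needed $P$-win.

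Finally I would record where the proof spends its assumptions: it uses only that legality of a move depends solely on the resulting framework and that the two players' aims are complementary, so it is independent of the particular semantics and covers the grounded and stable cases (indeed every completist semantics) uniformly. The theorem's first clause is the $Z = P$ instance of the lemma. For the second clause, that non-minimal moves can be strictly disadvantageous, I would point to Example~\ref{ex:corrupt}: the effective move $\{B1\}$ lets $O$ win, whereas the non-minimal $\{B1,B2\}$ leads to $O$'s defeat, witnessing a strict loss of outcome.
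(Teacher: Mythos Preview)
Your proof is correct and shares the paper's core idea: case-split on whether the opponent's response is effective against the larger pool, and when it is not, neutralize by playing the withheld arguments. The difference is in packaging and rigor. The paper argues directly via a three-way case split on the opponent's move $M$ (effective in $\calA$ only, in both, in $\calA'$ only); in its first case it notes that after $P$ plays $U$ the resulting state is ``similar'' to $\calA'$ with $O$'s move $M$ exposed and $O$ ``more constrained'', but it does not close that recursion. Your symmetric reserve-monotonicity lemma and induction on the number of unplayed arguments are exactly what is needed to make that step precise: the informal ``more constrained'' becomes an application of the same lemma with the roles swapped ($Z=O$, $D=n$), at strictly smaller measure. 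The paper's Case~3 (moves effective only from $\calA'$) is vacuous in your direction, since you quantify only over $O$-moves legal from the small pool. So your route is the paper's idea carried to a complete argument; what it buys is an actual proof where the paper offers a sketch, at the modest cost of stating the lemma symmetrically and tracking the induction measure.
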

\begin{proof}
Consider the argumentation framework $\calA$ resulting after $P$ (say)
has made a minimal move $S$,
and the argumentation framework $\calA'$ resulting from the move $S \cup U$, where $U$ is non-empty
(the \emph{unnecessary} arguments).
Consider a potential move $M$ by the opponent $O$.
There are three cases:

1. $M$ is an effective move in $\calA$, but not in $\calA'$.
In this case, the presence of $U$ prevents $M$ in $\calA'$.
However, in response, in $\calA$, $P$ can play $U$ as her next move, neutralizing the effect of $M$.
This leaves the game in $\calA$ in a similar state as $\calA'$,
except that $O$'s move $M$ has been exposed.
$O$'s next move in $\calA$ is more constrained because elements of $M$ might attack
some of $O$'s remaining arguments.

Furthermore, in $\calA$, $P$ can play a minimal subset of $U$.

2. $M$ is an effective move in both $\calA$ and $\calA'$.
In this case, if $P$ has an effective move $M'$ in $\calA'$
then $M' \cup U$ is an effective move in $\calA$
and it leaves $\calA$ in the same state as $\calA'$.
On the other hand, that $P$ has an effective move $N$ in $\calA$
does not imply that she has an effective move in $\calA'$.
Example \ref{ex:corrupt} (and Figure \ref{fig:SAF8}) gives an example of a split argumentation framework
where a non-minimal move (by $O$, in that case) leads to her losing the game.

3. $M$ is an effective move in $\calA'$, but not in $\calA$.
In this case, $U$ has enabled a move by $O$ in $\calA'$ that is not effective in $\calA$.
\qed
\end{proof}

Thus, imposing the higher standard of minimal moves does not impact honest play.
There is no direct link between improved resistance to collusion and a higher standard of play,
but we will see that normalizing the play of minimal moves
does lead to an improvement in resistance to collusion.

There are several computational problems we must address in order to determine
whether or not resistance to corruption still holds under this higher standard.
The Minimality Problem is to verify that a given effective move is a minimal move.

\ \\ \ \\
\noindent
\textbf{The Minimality Problem for $P$}

\textbf{Instance}
A split argumentation framework $(\Acom, \AP, \AO, \defeat)$,
an argumentation semantics,
an aim for $P$,
and a move $M \subseteq \AP$ that achieves the aim for $P$.

\textbf{Question} 
Is $M$ a minimal set that achieves the aim under the given semantics?
That is, is there no subset $N \subset M$ such that
$P$'s desired outcome is achieved in
the argumentation framework $(\Acom \cup N, \defeat)$?
\ \\

It is clear that the complement of this problem can be solved by
a non-deterministic algorithm that guesses $N$ and uses an oracle for the Aim Verification problem.

The higher standard imposes some computational cost for honest play.
Each move must also verify that the proposed move is minimal.
However, it turns out that this additional work does not affect the computational complexity of honest play.
This is because the complement of the Minimality problem has the same complexity as the
Desired Outcome problem;
both do nondeterministic search using an oracle for Aim Verification.
The complexity of the Minimality problem and honest normal play are presented in
Table \ref{table:WSM}.

We now turn to the complexity of disguising corruption under this higher standard.
We first address the problem of disguising the exploitation of collusion to allow $P$ to win.
We begin by assuming that only $O$ is required to play minimal moves,
because it is the loser whose strategy must withstand scrutiny.

The two players must arrange an alternating sequence of moves by $P$ and $O$
such that each move is effective and $O$'s moves are minimal.
Furthermore, after $P$'s last move there must be no move for $O$
(that is, $O$ must lose).

\ \\
\noindent
\textbf{The Winning Sequence with Minimality Problem for $P$}

\textbf{Instance}
A split argumentation framework $(\Acom, \AP, \AO, \defeat)$
and a desired outcome for $P$.

\textbf{Question}
Is there a sequence of moves such that $P$ wins and $O$ always plays minimal moves?
\ \\

This problem can be solved by a non-deterministic algorithm
that guesses moves for $P$ and $O$
and uses oracles for the aim verification problem for $P$ and $O$,
and the minimality problem for $O$,
to ensure that the moves are valid.
Finally, an oracle for
the (complement of) the desired outcome problem for $O$
is needed to verify that $O$ loses.
We denote this problem by $WSM_P$, and the similar problem for $O$ is $WSM_O$. 
The variant where \emph{both} $P$ and $O$ must play minimal moves is denoted by $WSM'$.
This variant reflects a strict application of the standard to both players.

\begin{theorem}
Consider strategic argumentation under the standard that every move must be minimal.
The complexity of verifying that a move is minimal,
the complexity of normal, honest play,
and the complexity of Winning Sequence problems, for both $P$ and $O$,
are as stated in Table \ref{table:WSM}.
\end{theorem}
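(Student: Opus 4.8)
The plan is to separate membership (upper bounds) from hardness (lower bounds), and to treat the three groups of claims --- the Minimality problem, honest play, and the Winning Sequence problems --- in turn, for each player and each aim. All upper bounds follow from the oracle-based algorithms already exhibited: the complement of the Minimality problem is decided by guessing a proper subset $N \subset M$ and calling the Aim Verification oracle, so it sits in $\NP^{AV}$; and $WSM$ is decided by one nondeterministic guess of the entire move sequence, verified by oracle calls to $AV_P$, $AV_O$, the Minimality problem for $O$, and the complement of $DO_O$. The routine but essential step is to check that these oracle compositions collapse to the classes in Table \ref{table:WSM}, using the standard observation (already used implicitly for Table \ref{table:AVDO}) that a nondeterministic guess followed by an $\NP$- or $\coNP$-style verification can be merged into a single (co)nondeterministic computation --- for instance, for the existential stable aim one guesses the subset and the witnessing stable extension simultaneously.

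First I would settle the Minimality problem and honest play. Membership is as above: the complement of Minimality lies in $\NP^{AV}$, which by the merging argument coincides with the complement of the class of $DO$; hence Minimality is complete for the co-class of $DO$ for each aim and player. For honest play I would argue that imposing the minimality standard does not change the complexity reported in Table \ref{table:AVDO}: honest play is a $\P^{DO}$ computation, and the extra per-move obligation to certify minimality is a call to the Minimality oracle, which lies in the co-class of $DO$ and is therefore absorbed into the same $\P^{DO}$ budget. For the Minimality hardness bounds I would give a reduction from (the complement of) $DO$, or directly from a complete problem for the target class, via a gadget that appends to a fixed effective move a single argument whose removal preserves effectiveness exactly when the encoded instance is positive, so that non-minimality witnesses the instance.

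Next I would turn to the Winning Sequence problems, which carry the real content. Membership is the oracle algorithm described above; determining the exact class again reduces to checking how $\NP$ over the oracles $AV_P$, $AV_O$, the Minimality problem for $O$, and the complement of $DO_O$ collapses for each aim. The hardness direction is the main obstacle. Here I would adapt the QBF-based construction sketched for the honest-play theorem --- $n$ copies of a quantified Boolean formula with constraints forcing lexicographically greater solutions, with $P$ performing binary search and $O$'s moves merely unlocking successive copies --- but now I must guarantee that the winning sequence I build respects the minimality standard imposed on $O$. The delicacy, illustrated by Example \ref{ex:SAF1}, is that forcing $O$ to play minimal moves can itself change who wins; so the gadgets must be designed so that, along the intended play, every effective move for $O$ is already minimal (no redundant attacker is ever available), while every deviation a corrupt $O$ might attempt is still blocked. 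Verifying this invariant across the whole reduction, and checking that the forced-minimality constraint does not create a cheaper disguise than honest play (thereby actually establishing resistance), is where I expect the proof to require the most care.

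Finally, the variant $WSM'$, in which both players must play minimally, follows by the same template: its membership adds a Minimality oracle call for $P$ as well, which does not raise the class, and its hardness uses the same reduction with the additional check that $P$'s intended moves are also minimal by construction. I would close by reading off, for each row of Table \ref{table:WSM}, the class obtained from the membership collapse and matched by the corresponding hardness reduction.
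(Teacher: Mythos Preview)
Your decomposition into membership and hardness, and your treatment of the Minimality problem and of honest play, match the paper's approach closely. In particular, your observation that the complement of Minimality sits in $\NP^{AV}$, hence in the same class as $DO$, so that the extra per-move minimality check is absorbed into the existing $\P^{DO}$ budget for honest play, is exactly the reasoning the paper relies on (and is in fact sharper than the loose $\NP^{AV,Min}$ bound one might first write down). For Minimality hardness the paper uses a concrete gadget rather than your abstract ``append one removable argument'' idea: starting from the standard encoding of a formula $\psi$ as an argument, it adds arguments $\neg\psi$ and $\psi'$ with $\psi \defeat \neg\psi \defeat \psi'$, makes $\psi'$ the focal argument, and arranges that playing all of the $I_p, I_{\neg p}$ selector arguments already forces acceptance of $\psi'$; then that full set is minimal iff $\psi$ is unsatisfiable. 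Your gadget is in the same spirit but would need to be made equally concrete to actually pin down the class for each aim.

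The genuine gap is in your $WSM$ hardness argument. You propose to adapt the $n$-copies, lexicographically-greatest-solution, binary-search construction used for the honest-play theorem. That construction is tailored to $\Delta^p_i$-completeness: the binary search is what makes a $\P^{\Sigma^p_{i-1}}$ computation the right model. But the $WSM$ problems in Table~\ref{table:WSM} are $\Sigma^p_2$-, $\Sigma^p_3$-, or $\NP^{\NP^\PP}$-complete --- one nondeterministic level \emph{above} honest play, not at the same level. A lex-max/binary-search reduction cannot by itself witness $\Sigma^p_i$-hardness; you need a reduction from a genuinely existential--universal problem (e.g.\ $\exists\forall$-QBF for the $\Sigma^p_2$ rows), where the outer existential is realised by the guessed sequence of moves and the inner universal by the requirement that, at the end, $O$ has \emph{no} effective (minimal) move. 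Your careful remarks about ensuring $O$'s intended moves are already minimal along the reduction are well taken, but they sit on top of the wrong base construction. It is worth noting that the paper's own (suppressed) proof sketch also leaves the $WSM$ completeness direction open, so this is precisely the place where new work is required rather than adaptation of the honest-play reduction.
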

\skipit{
\begin{proof}
$Min_P$ is the complement of $DO_P$, at a high level.
The problem is clearly in $\coNP^{AV_P}$.
Completeness is obtained from the standard set-up \cite{Maher16}
by adding $\neg \psi$ and $\psi'$ as arguments where $\psi$ attacks $\neg \psi$, which attacks $\psi'$,
setting $\psi'$ as the focal argument,
and
arranging that if all $I_p$ and $I_{\neg p}$ are given then $\neg \psi$ is attacked (and $\psi'$ is accepted).
Consequently, the set of all $I_p$ and $I_{\neg p}$ is minimal iff $\psi$ is unsatisfiable.

Choosing a minimal effective move can be computed by guessing a move,
and then verifying that it is both effective and minimal.
Thus it can be computed by a non-deterministic algorithm with oracles for $AV$ and $Min$.
For $P$, this is in $\NP^{AV_P, Min_P}$; for the existential aim this is $\Sigma^p_2$,
for the universal, unrejected and uncontested aims this is $\Sigma^p_3$.
Thus $\NMP$ involves a polynomial sequence of choosing minimal effective moves.
Thus the complexity of $\NMP$ is $\Delta^p_3$ for existential aim and $\Delta^p_4$ for the others.

$WSM_P$ can be computed by a non-deterministic algorithm with oracles for $AV_P$, $AV_O$, $Min_O$, and (the complement of) $DO_O$.
Similarly for $WSM_O$ and $WSM'$.

Completeness ???
\end{proof}
}

\begin{table*}[t]
\begin{center}
\begin{tabular}{|l||r|r|r|r|r|r|r|}
\hline
                                   & ~$Min_P$~  & $Min_O$~ & \NMP~ & \NMO~ &   ~~$WSM_P$~   & ~~$WSM_O$~  & $WSM'$~ \\  
\hline \hline
Grounded semantics & ~coNP-c~  & ~coNP-c~ & $\Delta^p_2$-c~  & $\Delta^p_2$-c~& $\Sigma^p_2$-c~  & $\Sigma^p_2$-c~ & $\Sigma^p_2$-c~ \\ 
\hline \hline
\multicolumn{8}{|l|}{Stable semantics}   \\
\cline{2-8}
~~~~~Existential                  & coNP-c~        & $\Pi^p_2$-c~  & $\Delta^p_2$-c~  & $\Delta^p_3$-c~ &    $\Sigma^p_3$-c~  & $\Sigma^p_2$-c~ &    $\Sigma^p_3$-c~ \\  
\cline{2-8}
~~~~~Universal                   & $\Pi^p_2$-c~  & coNP-c~  & $\Delta^p_3$-c~  & $\Delta^p_2$-c~ & $\Sigma^p_2$-c~   & $\Sigma^p_3$-c~ &    $\Sigma^p_3$-c~ \\ 
\cline{2-8}
~~~~~Unrejected                & $\Pi^p_2$-c~  & coNP-c~   & $\Delta^p_3$-c~  & $\Delta^p_2$-c~ & $\Sigma^p_2$-c~  & $\Sigma^p_3$-c~ &    $\Sigma^p_3$-c~ \\  
\cline{2-8}
~~~~~Uncontested               & $\Pi^p_2$-c~  & coNP-c~  & $\Delta^p_3$-c~  & $\Delta^p_2$-c~ & $\Sigma^p_2$-c~  & $\Sigma^p_3$-c~ &    $\Sigma^p_3$-c~ \\ 
\cline{2-8}
~~~~~Plurality/Majority~         & co$\NP^\PP$-c~ & co$\NP^\PP$-c~  & $\P^{\NP^\PP}$-c~ & $\P^{\NP^\PP}$-c~ & $\NP^{\NP^\PP}$-c~ & $\NP^{\NP^\PP}$-c~ & $\NP^{\NP^\PP}$-c~   \\
\cline{2-8}
~~~~~Supermajority            & ~co$\NP^\PP$-c~ & ~co$\NP^\PP$-c~  & $\P^{\NP^\PP}$-c~ & $\P^{\NP^\PP}$-c~  & ~$\NP^{\NP^\PP}$-c~  & ~$\NP^{\NP^\PP}$-c~  & ~$\NP^{\NP^\PP}$-c~  \\ 
\hline
\end{tabular}
\bigskip
\caption{\label{table:WSM} Complexity of Minimality problems, normal play with the minimality standard, and Winning Sequence problems for $P$ and $O$ 
under the grounded and stable semantics, for selected aims (of $P$).}
\end{center}
\end{table*}

In the case of espionage, one player, say $P$, illicitly learns her opponent's arguments $\AO$ and desires a strategy
that will ensure $P$ wins, no matter what moves $O$ makes.
A \emph{strategy} for $P$ in a split argumentation framework $(\Acom, \AP, \AO, \defeat)$
is a function from a set of common arguments 
to the set of arguments to be played in the next move.
A sequence of moves $S_1, T_1, S_2, T_2, \ldots$
resulting in common arguments $\Acom^{P, 1}, \Acom^{O, 1}, \Acom^{P, 2}, \Acom^{O, 2}, \ldots$
is \emph{consistent with} a strategy $s$ for $P$ if,
for every $j$,
$S_{j+1} = s(\Acom^{O, j}, \AP)$.
A strategy for $P$ is \emph{winning} if
every valid sequence of moves consistent with the strategy is won by $P$.

\ \\
\noindent
\textbf{The Winning Strategy with Minimality Problem for $P$}

\textbf{Instance}
A split argumentation framework $(\Acom, \AP, \AO, \defeat)$
and a desired outcome for $P$.

\textbf{Question}
Is there a winning strategy for $P$ that only makes minimal moves?
\ \\

Under the higher standard we impose,
strategic argumentation is  \emph{resistant to collusion (espionage)} if
the complexity of the Winning Sequence (Winning Strategy) with Minimality problem is greater than
the complexity of honestly playing the strategic argumentation game,
under the widely-believed complexity-theoretic assumption that the polynomial hierarchy does not collapse.
In that case, the computational work needed to exploit the corrupt behaviour
is greater than that required to simply play the argumentation game.

\begin{theorem}
Under the grounded and stable semantics,
and for every desired outcome under consideration,
the Winning Sequence Problem with Minimality 
has greater complexity than normal, honest play.

Consequently, in such situations, strategic argumentation is resistant to collusion. 
\end{theorem}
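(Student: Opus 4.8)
The plan is to obtain this as a corollary of the complexity classification recorded in Table~\ref{table:WSM} and established in the preceding theorem. Once the complexity of each Winning Sequence with Minimality problem, and of honest play under the minimality standard, is fixed up to completeness, the resistance claim reduces to comparing pairs of complexity classes together with the hypothesis that the (counting) polynomial hierarchy does not collapse. I would therefore first identify, for each semantics and aim, the honest-play class that serves as the benchmark, and then argue that the matching Winning Sequence class strictly contains it.

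The first step is choosing the right benchmark. In a collusion engineered so that $P$ wins, it is $O$ who must throw the game while her moves still survive the minimality audit, so the honest baseline for $WSM_P$ is $O$'s honest play, namely $\NMO$; symmetrically, $WSM_O$ is compared with $\NMP$, and $WSM'$, in which both players are audited, with the honest play of whichever player loses. Reading these pairs off Table~\ref{table:WSM}, every one exhibits the same shape: honest play lies at a $\Delta^p_k$ level (or at $\P^{\NP^\PP}$ for the counting aims), whereas the matching Winning Sequence problem is complete for $\Sigma^p_k$ at the same level (or for $\NP^{\NP^\PP}$). For example, under the stable semantics with the existential aim $WSM_P$ is $\Sigma^p_3$-complete while $\NMO$ is $\Delta^p_3$-complete; for the universal, unrejected and uncontested aims the two levels are exchanged between $P$ and $O$; and under the grounded semantics both comparisons are $\Sigma^p_2$ against $\Delta^p_2$.

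The comparison then rests on two strictness facts. For the non-counting aims I would use that $\Delta^p_k \subseteq \Sigma^p_k$ with $\Delta^p_k \neq \Sigma^p_k$ unless the polynomial hierarchy collapses: if $\Delta^p_k = \P^{\Sigma^p_{k-1}}$ equalled $\Sigma^p_k = \NP^{\Sigma^p_{k-1}}$, then $\Sigma^p_k$ would be closed under complementation, forcing $\Sigma^p_k = \Pi^p_k$ and collapse to level $k$. For the plurality/majority and supermajority aims the analogous fact is $\P^{\NP^\PP} \neq \NP^{\NP^\PP}$, which follows by the same closure-under-complement argument relativized to the $\NP^\PP$ oracle from the non-collapse of the counting polynomial hierarchy. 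Since in each row the Winning Sequence problem is complete for the larger class while honest play sits in the smaller one, completeness yields that the Winning Sequence problem cannot belong to the honest-play class --- otherwise the two classes would coincide --- and is therefore strictly harder. This holds uniformly across every row of Table~\ref{table:WSM}, giving resistance to collusion in each case.

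The main obstacle is not the bookkeeping but being precise about what ``greater complexity'' is allowed to mean here. All the containments are theorems, but the separations are conditional, so the entire conclusion is only as strong as the non-collapse assumption; moreover, for the counting aims it is specifically the counting hierarchy, and not merely the ordinary polynomial hierarchy, whose non-collapse is required, and I would flag this explicitly. The second delicate point is that the argument genuinely needs the completeness half of the preceding theorem: a bare containment such as $\NMO \subseteq WSM_P$ would be vacuous, so the force of the result comes from the established hardness of the Winning Sequence problems combined with the membership of honest play in the strictly smaller class.
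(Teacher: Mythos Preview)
Your proposal is correct and is essentially the argument the paper intends: the theorem is stated as an immediate consequence of the complexity classifications in Table~\ref{table:WSM}, and the paper itself supplies no further proof beyond the table and the definition of resistance. Your care in matching $WSM_P$ against \NMO\ (and $WSM_O$ against \NMP) rather than against the winner's honest play is exactly the right reading---the paper signals this earlier with ``it is the loser whose strategy must withstand scrutiny''---and without it the comparison would fail (e.g.\ for the universal aim $WSM_P$ is $\Sigma^p_2$-complete while \NMP\ is $\Delta^p_3$-complete).
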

\skipit{
\begin{proof}
\end{proof}
}

Thus, we see that the higher standard of behaviour required of players has the desirable side-effect
that all aims under the stable semantics are resistant to collusion.

The restriction to minimal moves does not affect resistance to espionage.

\begin{theorem}
Under every completist semantics,
and for every desired outcome under consideration,
the Winning Strategy Problem with Minimality is $\PSPACE$-complete.

Consequently, in such situations, strategic argumentation is resistant to espionage. 
\end{theorem}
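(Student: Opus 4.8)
The plan is to establish $\PSPACE$-completeness in the two standard steps, membership and hardness, and to use the domination theorem to inherit hardness from the winning-strategy problem that lacks the minimality restriction.

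For membership, I would first observe that the game has only polynomially many rounds: each move adds at least one argument to $\Acom$ and no argument is ever removed, so play terminates after at most $|\AP| + |\AO|$ moves. A winning strategy for $P$ that makes only minimal moves can then be recognised by an alternating polynomial-time procedure that existentially guesses each of $P$'s moves and universally branches over each of $O$'s effective responses, accepting at a node where $O$ has no effective move. The only local work at each node is to check that $P$'s guessed move is effective and minimal (using the Aim Verification and Minimality oracles) and that $O$'s move is effective; by the results summarised in Tables \ref{table:AVDO} and \ref{table:WSM} all of these checks lie within the polynomial hierarchy, hence within $\PSPACE$. Since $PH \subseteq \PSPACE$, the recursion has polynomial depth, and alternating polynomial time coincides with $\PSPACE$, the problem is in $\PSPACE$. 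To make this uniform across all completist semantics I would restrict attention to well-founded constructed frameworks, invoking the earlier observation that every completist semantics agrees on the single (grounded) extension of a well-founded framework.

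For hardness, the key point is that imposing minimality does not change the set of yes-instances. By the domination theorem (the theorem that every non-minimal move is dominated by a minimal move), given any winning strategy for $P$ we may replace, node by node, each non-minimal move $m_1$ prescribed by the strategy with the minimal move $m_2$ that dominates it. Because $m_2$ yields outcomes at least as good as $m_1$ in every state and for every continuation, the modified strategy remains winning; and conversely a winning strategy using only minimal moves is in particular a winning strategy. Hence the Winning Strategy with Minimality problem and the unrestricted Winning Strategy problem have identical yes-instances, so it suffices to invoke the prior $\PSPACE$-completeness of the espionage (Winning Strategy) problem from \cite{Maher16}, whose hardness reduction from QBF uses a well-founded framework and therefore yields the bound for every completist semantics and every desired outcome under consideration.

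The main obstacle I anticipate is making the domination argument compose correctly across an \emph{entire} strategy rather than a single move. I would carry out an explicit induction on the game tree, verifying that after substituting a dominating minimal move at one node the resulting position is again one to which the domination theorem applies at the next node, and that the substitution never unblocks an $O$ move that the original play had prevented. The definition of domination---that $m_2$ leads to better or equal outcomes than $m_1$ in every state and for every future sequence of moves---is precisely what licenses this composition. I would also confirm that, by fixing a canonical choice of dominating minimal move, the converted strategy remains a well-defined function of the common arguments, as required by the definition of a strategy.
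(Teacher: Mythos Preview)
Your proposal is correct and follows essentially the same route as the paper: use the domination theorem to conclude that $P$ has a winning strategy iff she has one using only minimal moves, then inherit $\PSPACE$-completeness from the unrestricted Winning Strategy problem in \cite{Maher16}, with the extension to all completist semantics coming from the fact that the hardness construction there is well-founded. One small correction: your remark about restricting to well-founded frameworks belongs only in the hardness argument, not the membership argument---membership must be shown for arbitrary inputs, and there you should simply rely on the fact that Aim Verification for the standard completist semantics lies in the polynomial hierarchy (hence in $\PSPACE$), so the alternating polynomial-time procedure goes through without any well-foundedness assumption.
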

\skipit{
\begin{proof}
By Theorem \ref{thm:dom}, $P$ has a winning strategy iff $P$ has a winning strategy that makes only minimal moves.
By Theorems 5 and 13 of \cite{Maher16}, this problem is $\PSPACE$-complete
for both grounded and stable semantics.
The demonstration of $\PSPACE$-hardness uses an argumentation framework that is well-founded.
Consequently, all completist semantics consist only of the grounded extension,
and hence the problem, for any completist semantics, is $\PSPACE$-complete.
However, the complexity of playing the game honestly, as given in Table \ref{table:AVDO}
is lower in the polynomial counting hierarchy.
\end{proof}
}

\section{Beyond Resistance?}
Rather than rely on resistance to collusion,
we might want to impose even higher standards, 
so that collusion cannot be disguised.
We briefly explore this possibility.

The first question is:
Is the current high standard sufficient to prevent the disguise of collusion?
Unfortunately, the answer is no, as the following example shows.

\begin{figure}

\begin{center}
\begin{tikzpicture}[->,>=stealth',shorten >=1pt,auto,node distance=2.0cm,
                    semithick]
  \tikzstyle{fake}=[]
  \tikzstyle{O}=[circle,fill=white,draw=black,text=black,thick,minimum size=10mm]
  \tikzstyle{P}=[circle,fill=black!10,draw=black,text=black,thick,minimum size=10mm]
  
   \node[P] (A)							{$A$};
   \node[O] (B)	[right of=A]				{$B$};
   \node[P] (C)	[right of=B]				{$C$};
   \node[O] (D)	[right of=C]				{$D$};
   \node[O] (E)	[below of=A]				{$E$};
   \node[P] (F)	[below  of=B]				{$F$};
   \node[P] (G)	[below of=E]				{$G$};
   \node[P] (H)	[below  of=F]				{$H$};

  \path (B) 	edge				node {} (A)
	   (C)	edge	 			node {} (B)
	   (D)	edge				node {} (C)
	   (E)	edge				node {} (A)
	   (F)	edge				node {} (B)
	   (G)	edge				node {} (E)  
	   (H)	edge				node {} (E)
	   	edge				node {} (F)
;
\end{tikzpicture}
\end{center}

\caption{\label{fig:AH}A strategic argumentation game}
\end{figure}
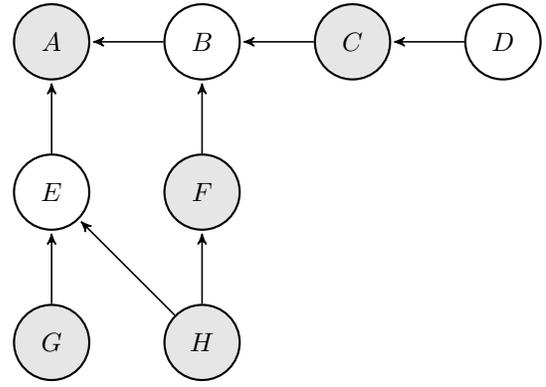

\begin{example}
Consider the split argument framework depicted in Figure \ref{fig:AH},
where arguments in $\AP$ are grey and arguments in $\AO$ are white,
and A is the critical argument.
If $P$ desists from playing $H$ then $P$ will win,
since the two arguments attacking $A$ ($B$ and $C$) can be attacked by $P$'s arguments $F$ and $G$, which cannot be attacked by $O$.
For example, the sequence of moves: $A, B, F, E, G$ results in $P$ winning.

On the other hand, the sequence of moves: $A, E, H, B, C, D$ results in $O$ winning.
Thus, $P$ and $O$ can collude to ensure $O$ wins.
\end{example}

This example suggests that a variation of the avoidance of self-inflicted injury might be needed to detect collusion more thoroughly.
Which leads to a second question: is it possible to impose a high enough standard that any collusion
cannot be disguised as compliant play?
Again the answer is no.

Consider the argumentation game in Figure \ref{fig:choice} under the grounded semantics, where $A$ is the critical argument.
After $P$ plays $A$, $O$ has the choice of playing $B$ or $C$.
Depending on this choice, either $P$ or $O$ will win.
If $P$ and $O$ collude they can determine the outcome,
but any real restriction imposed by a standard will restrict to one possible outcome,
so it cannot be a justified standard.
Thus any collusion in this game cannot be detected by imposing higher standards,
assuming standards are required not to interfere with honest (i.e. non-corrupt) play.

Hence, we see that collusion cannot be prevented simply by imposing higher standards.
Indeed, it appears that collusion is detectable iff the argumentation game with standards is fixed:
there is only one possible winner.
We must continue to rely on computational difficulty to discourage corruption.

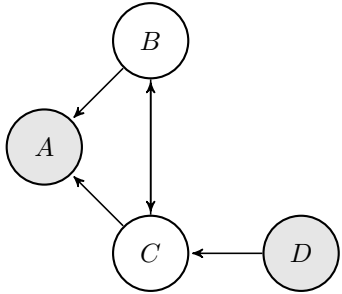
\begin{figure}[t]

\begin{center}
\begin{tikzpicture}[->,>=stealth',shorten >=1pt,auto,node distance=2.0cm,semithick]

  \tikzstyle{O}=[circle,fill=white,draw=black,text=black,thick,minimum size=10mm]
  \tikzstyle{P}=[circle,fill=black!10,draw=black,text=black,thick,minimum size=10mm]
  
   \node[P] (A)							{$A$};
   \node[O] (B)	[above right of=A]			{$B$};
   \node[O] (C)	[below right of=A]			{$C$};
   \node[P] (D)	[right of=C]				{$D$};  

  \path (B) 	edge				node {} (A)
	   	edge	 			node {} (C)
	   (C)	edge	 			node {} (A)
	   	edge	 			node {} (B)
	   (D)	edge				node {} (C)
;
\end{tikzpicture}
\end{center}

\caption{\label{fig:choice}A strategic argumentation game}
\end{figure}

\section{Defeasible Rule Languages}

The results above for abstract strategic argumentation extend to concrete argumentation languages based on defeasible rules.
This includes structured argumentation systems such as $\ASPIC$ \cite{ASPIC} and its derivatives,
and assumption-based argumentation (ABA) \cite{Bondarenko}, under grounded or stable semantics.
But it also includes a range of other languages such as defeasible logics \cite{TOCL10,MN10},
Ordered Logic \cite{OrderedLogic},
$LPDA$ \cite{LPDA} and Rulelog \cite{Rulelog}, among many that correspond to the grounded semantics.
Similarly, several concrete languages correspond to the stable semantics:
defeasible logics under stable model semantics \cite{flexf,Maier13} ,
$\DefLog$ \cite{DefLog},
and $ASPDA$ \cite{ASPDA}, among others.
The emulation of abstract argumentation by these languages \cite{cdr_ICLP}
is the key element in carrying complexity results for abstract strategic argumentation
to these languages.

\section{Conclusion}

Raising the standard of play to require minimal moves makes exploitation of collusion more difficult
because it eliminates an easy way to introduce arguments into the common pool.
Surprisingly, however, we found that it also improved resistance to collusion.
Such behaviour should not be expected, in general, from the raising of standards;
it will depend, in part, on the complexity of verifying the standard.

We briefly touched on the possibility of even higher standards:
requiring a minimum cardinality move,
or a move that avoids self-inflicted injury.
It remains an interesting open problem whether the minimality standard
can be strengthened along those lines,
and how that might affect resistance to collusion.

\bibliographystyle{named}
\bibliography{audit_stripped}


\end{document}